\newtheorem{theorem}{Theorem}
\newtheorem{lemma}[theorem]{Lemma}
\DeclareMathAlphabet\mathbfcal{OMS}{cmsy}{b}{n}
\newcommand{\R}{\mathbb{R}}
\newcommand{\N}{\mathbb{N}}
\newcommand{\E}{\mathbb{E}}
\newcommand{\hist}{\mathcal{S}}
\newcommand{\Prob}{p}
\newcommand{\q}{q}
\newcommand{\prob}{\Prob}
\newcommand{\var}{\text{Var}}
\newcommand{\ind}{\mathbbm{1}}
\newcommand{\sep}{\!\;|\;\!}
\newcommand{\hit}{\text{hit}}
\newcommand{\vocab}{\mathbb{M}}
\begin{document}

\twocolumn[

\mainpapertitle{Probabilistic Querying of Continuous-Time Event Sequences}

\mainauthor{Alex Boyd${}^1$ \And Yuxin Chang${}^2$ \And  Stephan Mandt${}^{1,2}$ \And Padhraic Smyth${}^{1,2}$}

\mainaddress{ ${}^1$Department of Statistics\quad ${}^2$Department of Computer Science \\ University of California, Irvine \\ \texttt{\{alexjb,yuxinc20,mandt,p.smyth\}@uci.edu}} 

\runningauthor{Probabilistic Querying of Continuous-Time Event Sequences}
]

\begin{abstract}
Continuous-time event sequences, i.e., sequences consisting of continuous time stamps and associated event types (``marks''), are an important type of sequential data with many applications, e.g., in clinical medicine or user behavior modeling. Since these data are typically modeled autoregressively (e.g., using neural Hawkes processes or their classical counterparts), it is natural to ask questions about future scenarios such as ``what kind of event will occur next'' or ``will an event of type $A$ occur before one of type $B$''. Unfortunately, some of these queries are notoriously hard to address since current methods are limited to naive simulation, which can be highly inefficient. 
This paper introduces a new typology of query types and a framework for addressing them using importance sampling. Example queries include predicting the  $n^\text{th}$ event type in a sequence and the hitting time distribution of one or more event types. 
We also leverage these findings further to be applicable for estimating general ``$A$ before $B$'' type of queries.
We prove theoretically that our estimation method is effectively always better than naive simulation and show empirically based on three real-world datasets that it is on average 1,000 times more efficient than existing approaches.
\end{abstract}

\section{Introduction}

Continuous-time event data occurs across a wide range of applications and areas such as user behavior modeling \citep{mishra2016feature, kumar2019predicting}, finance \citep{bacry2012non, hawkes2018hawkes}, and healthcare \citep{nagpal2021deep, chiang2022hawkes}. The data typically consists of sets of variable-length sequences where each sequence is a set of ordered events, and each event is associated with a continuous time-stamp and a categorical event type. Such data are often modeled as marked temporal point processes (MTPPs), and a broad variety of  modeling frameworks have been successfully developed both in the statistical literature (e.g., Hawkes processes \citep{hawkes1971spectra}) and in the machine learning literature (e.g., neural MTPP models \citep{mei2017neural}.
These MTPP modeling frameworks provide a general and flexible setup for making one-step-ahead predictions  such as the timing and/or type of the next event time, conditioned on a partial history of sequence.

In this paper, we look beyond one-step ahead predictions for multiple event types and instead investigate how to efficiently answer queries that involve more complex statements about future events and their timing. Such queries include hitting time queries (``what is the probability that at least one event of type $A$ will occur before time $t$''), queries of the form ``what is the probability that $A$ will occur  before $B$,'' 
as well as computing the marginal distribution of  event types for the $n^{\text{th}}$ next event (irrespective of time). These types of queries are useful across a variety of applications, such as making predictions conditioned on a patient's medical and treatment history, or conditioned on a customer's page view and purchase history.

However, exact computation of such queries is intractable in general except in the case of simple parametric models. For a standard MTPP model to directly answer such queries requires that all intervening events (from current time to the event(s) of interest in the query) are marginalized over. In particular, this involves marginalizing over both the combinatorially-large space of possible  event types as well as the uncountably infinite space of possible event timings. While direct simulation of future trajectories from a model provides one avenue for answering such queries (e.g., see \cite{daley2003introduction}) these ``naive'' methods can be very inefficient (both statistically and computationally), as we will demonstrate later in the paper. More efficient alternative approaches (to the naive simulation method) appear to be completely unexplored (to our knowledge), for both neural and non-neural MTPP models.

We develop a general query framework based on importance sampling that enables efficient estimates of various types of queries. In our approach, we first transform each query into unified forms and then derive the marginal distribution of interest as functions of type-specific intensities (expected instantaneous rates of occurrence). Our proposed novel marginalization scheme empowers real-time computation of previously intractable probabilistic queries, with proven higher efficiency compared to naive estimates. 
Furthermore, experiments on three real-world datasets in different domains demonstrate that our proposed estimation method is significantly more efficient than the naive estimate in practice. For example, for hitting time queries with neural Hawkes processes, we show an average magnitude of $10^3$ reduction in estimator variance.

Our approach for answering probabilistic queries is general-purpose in the sense that it can be integrated with any intensity-based black-box MTPP models, both parametric and neural. To summarize, our main contributions are:
\begin{itemize}[itemsep=0.0em,leftmargin=0.9em,topsep=0em]
\item We identify and formalize a general class of probabilistic queries that cover a wide range of queries of interest, such as the distribution of the first occurrence of certain event types (hitting times), the $n^{\text{th}}$ occurring event type irrespective of time (marginal mark queries), or queries addressing the order of event types (``A before B'' queries).
\item Within this class of queries, we develop a novel proposal distribution for importance sampling. This distribution is easy to sample from, simple to evaluate likelihoods with, and results in \emph{guaranteed} increases in efficiency when compared to existing estimation techniques.
\item We evaluate our proposed estimation technique across three real-world user behavior datasets, as well as in an artificial setting. In all cases, we find dramatic reductions in estimator variance compared to existing methods---often times by several orders of magnitude.
\end{itemize}

\section{Related Work}
A large variety of MTPP models have been developed over recent decades, aimed at modeling sequences of marked event data with varying sorts of behaviors. This behavior has been both explicitly modeled with parametric MTPP models \citep{isham1979self, daley2003introduction}, and implicitly using neural network-based methods \citep{du2016recurrent, bilovs2019uncertainty, shchur2019intensity, enguehard2020neural, zuo2020transformer, deshpande2021long}. Notably in these categories are the self-exciting Hawkes process \citep{hawkes1971spectra, liniger2009multivariate} and the neural Hawkes process \citep{mei2017neural} respectively. The majority of neural MTPP models utilize some form or extension of recurrent neural networks to model conditional intensity functions (or equivalent transformations thereof).
MTPP models have been broadly applied to next event prediction across a number of different application areas: seismology \citep{ogata1998space}, finance \citep{bacry2012non, hawkes2018hawkes}, social media behavior \citep{mishra2016feature, rizoiu2017expecting}, and medical outcomes \citep{10.2307/2985181, andersen2012statistical}.\footnote{Survival analysis is a special case of temporal point processes where the event of interest can only occur once.} Neural-based methods have also been successful at more novel tasks such as imputing missing data \citep{shchur2019intensity, mei2019imputing, gupta2021learning} and sequential representation learning \citep{shchur2019intensity, NEURIPS2020_f56de5ef}. 

However, to the best of our knowledge, apart from the naive sampling approach (e.g., \cite{daley2003introduction}), there is no existing work on answering general probabilistic queries (such as hitting time of a collection of event types) with MTPP models, which is the focus of this paper. That being said, estimating these queries for discrete time has been investigated \citep{boyd2022predictive}. While there does not exist an easy mapping of those techniques onto continuous time, this previous work will serve as a large source of inspiration for what we propose in this paper. 

\section{Preliminaries}
\subsection{Notation for Event Sequences}
Let $\tau_1, \tau_2, \dots \in \R_{\geq 0}$ be a sequence of continuous random variables with the constraint that $\forall_i: \tau_i < \tau_{i+1}$. These represent the time of occurrence for events of interest. Each event has an associated categorical value, such as a label or a location, that is referred to as a \emph{mark}. An event is jointly represented as (i) a time of occurrence $\tau_i$ and (ii) an associated mark random variable $\kappa_i\in\vocab$.  
In this work we will focus on the finite discrete setting of a fixed vocabulary for marks: $\vocab=\{1,2,\dots,K\}$, although more generally the mark space $\vocab$ can be defined on a variety of different domains.
\\\\
Let the \emph{sequence} of events over a specified time range $[a,b]\subset\R_{\geq 0}$ be denoted as 
$$\hist[a,b]=\{(\tau_i,\kappa_i)\sep \tau_i \in [a,b] \text{ for } i\in\N\}.$$
with similar definitions for $\hist(a,b]$ and $\hist[a,b)$.
For simplicity, we will let $\hist(t)$ be shorthand for $\hist[0,t)$ such that $\hist(\tau_i)=\{(\tau_1,\kappa_1),\dots,(\tau_{i-1},\kappa_{i-1})\}$.\footnote{Note that the majority of the point process literature refers to this sequence as a \emph{history} of events and is represented via $\mathcal{H}$. We forgo this traditional terminology and notation to emphasize that our work is primarily about estimating queries for \emph{future} events.} 
We will use $\hist_k$ to refer to \emph{mark-specific} sequences, i.e., $\hist_k(t) = \{(\tau_i,\kappa_i) \in \hist(t) \sep \kappa_i=k\}$.

\subsection{Marked Temporal Point Processes}
The generative mechanism for these point patterns are generally referred to as \emph{marked temporal point processes} (MTPPs). 
MTPP models are capable of approximating the distribution of a given sequence of $N$ events, $\prob(\hist[0,\tau_N])$.\footnote{For brevity and consistent notation, we will be using $\prob(\cdot)$ in reference to both probability densities and masses when appropriate.} These models are typically constructed in an autoregressive fashion,
\begin{align}
\prob(&\hist[0,\tau_N])=\prod_{i=1}^{N}\prob(\tau_i,\kappa_i\sep\hist[0,\tau_{i-1}]),
\end{align}
where the distribution for the next event $(\tau_i,\kappa_i)$ conditioned on the preceding terms is  modeled with the expected instantaneous rate of change for each mark. This is referred to as the \emph{marked intensity function} and is defined formally as
\begin{align}
\lambda_k(t\sep\hist(t))dt:=\E_p\left[\ind(|\hist_k[t,t+dt)|=1)\sep \hist(t)\right].
\end{align}
For brevity, we typically use the following $*$ convention to suppress the conditional: $\lambda^*_k(t):=\lambda_k(t\sep\hist(t))$. Note that these functions not only condition on the preceding events, but also on the fact that no events have occurred since the last event up until time $t$, i.e., $\prob(\cdot\sep\hist[0,t)) \neq \prob(\cdot\sep\hist[0,\tau_{i-1}])$.

The total intensity function, $\lambda^*(t):=\sum_{k\in\vocab}\lambda^*_k(t)$, is sufficient to describe the timing of the next event $\tau_i$. The distribution of the mark conditioned on the timing of the next event is naturally described as $\prob(\kappa_i=k\sep\tau_i=t)\equiv \frac{\lambda^*_k(t)}{\lambda^*(t)}$. We will be assuming that the native output of any model we are working with will produce a vector of marked intensity functions over the mark space $\vocab$ evaluated at time $t$. Any MTPP with a defined set of marked intensity functions can be easily sampled from by utilizing a thinning procedure \citep{ogata1981lewis}, if not directly.

Lastly, the likelihood of a given sequence $\hist$ of length $N$ over an observation window $[0,T]$ can be computed in terms of intensity values:
\begin{align}
\prob(\hist[0,T])=\left(\prod_{i=1}^{N}\lambda_{\kappa_i}^*(\tau_i)\right)\exp\left(-\int_0^T \lambda^*(s)ds\right)\hspace{-0.2em}.
\end{align}

\section{Querying MTPPs}

We are interested in evaluating probabilistic statements, or rather \emph{queries}, on a trained MTPP model $p$. 
Furthermore, we are  interested in evaluating queries that are conditioned on a partially observed sequence (e.g., ``what is the likelihood that at least one event of type $A$ will occur in the next year given a patient's medical history?''). 

Formally, we define a probabilistic query as a probability statement of the form
\begin{align}
\prob(\hist \in \mathcal{Q}) \text{ where } \mathcal{Q} \subset \Omega_{\hist}\equiv \text{Sample Space of } \hist,
\end{align}
where $p$ is the model's distribution over future event sequences. We refer to $\mathcal{Q}$ as the \emph{query space}. The contents of the query space naturally will vary depending on the query at hand. An example query space and associated elements can be seen in \cref{fig:example_query_space}. It is worth noting that in most contexts, the cardinality of $\mathcal{Q}$ will be uncountably infinite. 

\begin{figure}
    \centering
    \includegraphics[width=\columnwidth]{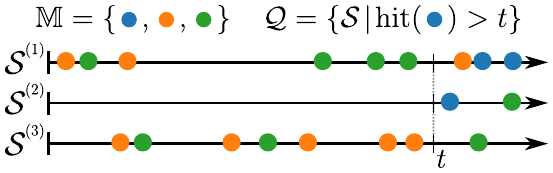}
    \caption{Example query space $\mathcal{Q}$ for the hitting time (first occurrence) of blue marks being greater than some time $t$. Sequences shown, $\hist^{(i)}$, all belong to the query space as they each do not contain a blue event occurring before time $t$.}
    \label{fig:example_query_space}
\end{figure}

This section will begin by discussing what probabilistic queries 
are readily available and tractable for a given model. Following this, we will present a novel class of queries, of which include hitting time and marginal mark queries, as well as an importance sampling estimation procedure. Finally, we will  discuss ``A before B'' queries and how to efficiently estimate them under our novel framework. \emph{Without loss of generality, we suppress the notation for conditioning on partially observed sequences and present all derivations and notations for unconditional queries.}

\subsection{Directly Tractable Queries}
Due to the model's autoregressive nature, queries about the immediate next event of a sequence are the only types of queries that can be directly evaluated without marginalization. We will now present the two main types for MTPPs.

\paragraph{Marginal Distribution of Next Timing: $\tau_1$}
In general, it can be shown that $\lambda^*(t)=\frac{f_{\tau_i}(t \sep \hist[0,\tau_{i-1}])}{1-F_{\tau_i}(t \sep \hist[0,\tau_{i-1}])}$ where $t \in (\tau_{i-1},\tau_i]$, $f_{\tau_i}$ is the probability density function (PDF) of $\tau_i$, and $F_{\tau_i}$ is the and cumulative density function (CDF) of $\tau_i$. By recognizing that $\lambda^*(t)=\frac{d}{dt}\log(1-F_{\tau_i}(t\sep \hist[0,\tau_{i-1}]))$, we find that the CDF of the next event timing $\tau_1$ is
\begin{align}
\prob(\tau_1 \leq t) := F_{\tau_1}(t) = 1 - \exp\left(-\int_0^t \lambda^*(s)ds\right).\label{eq:first_time_cdf}
\end{align}
Differentiating this result with respect to $t$ yields the PDF: $f_{\tau_1}(t)=\lambda^*(t)\exp\left(-\int_0^t\lambda^*(s)ds\right)$.
Note that we only immediately have access to the analytical form of the first future event timing $\tau_1$. To achieve the same results for $\tau_i$ in general would require marginalizing over all $i-1$ events which is rather cumbersome to do exactly.

\paragraph{Marginal Distribution of Next Mark: $\kappa_1$} 
Let $A\subset\vocab$. It follows then that the probability of the first event having a mark in $A$ is computed as follows:
\begin{align}
\prob(\kappa_1 \in A) & = \int_0^\infty \prob(\kappa_1 \in A \sep \tau_1 = t)f_{\tau_1}(t)dt\nonumber \\
& = \int_0^\infty \frac{\lambda_A^*(t)}{\lambda^*(t)}\lambda^*(t)\exp\left(-\int_0^t \lambda^*(s)ds\right)dt\nonumber \\
& = \int_0^\infty \lambda_A(t)\exp\left(-\int_0^t \lambda^*(s)ds\right)dt,\label{eq:first_marg_mark}
\end{align}
where $\lambda^*_A(t)=\sum_{k\in A}\lambda^*_k(t)$. Replacing the outer integration bounds of $[0,\infty)$ with $[a,b]$ gives the joint query $\prob(\tau_1\in[a,b], \kappa_1\in A)$.

Both of these different queries can potentially be computed analytically if the form of $\lambda^*$ permits, otherwise they can be estimated using approximate integration techniques.

\subsection{Naive Estimation of Queries}
When considering more complex queries, for example those that deal with sequences of events or those far in the future, it becomes necessary to have to rely on simulating potential trajectories in order to help estimate their values. This is due to the fact that exactly representing a probabilistic query in terms of intensity values would need to involve many nested integrals (for each potential interim event), potentially an infinite amount of them depending on the query.

The de facto method for approximating arbitrary probabilistic queries involves generating sequences and observing the frequency at which the query condition is met \citep{daley2003introduction}. This can be seen as a Monte Carlo estimate to the following formulation:
\begin{align}
\prob(\hist(T)\in\mathcal{Q}) = \E_{\prob}\left[\ind(\hist(T)\in\mathcal{Q})\right],
\end{align}
where $\ind(\cdot)$ is the indicator function. We refer to this procedure as ``naive'' estimation because this does not take into account any information about the query when sampling. 

\subsection{General Restricted-Mark Queries} \label{sec:gen_mark_queries}
One way to improve upon the naive procedure is to leverage information about the query in a proposal distribution in conjunction with importance sampling.
To do so though, we must first constrain ourselves to a specific class of query being considered. Additionally, this class of interest should take into account different aspects of sampling sequences using MTPPs for our proposal distribution. Namely, these models can easily be forced to \emph{not} sample events of specific types over a period of time (e.g., set $\lambda^*_A(t):=0$ for some time interval). Conversely, it is not immediately obvious how to \emph{encourage} or \emph{force} an event to occur within a specified time range.

As such, a natural class of queries can be seen in which over one or more specified spans of time we restrict what types of events are allowed and not allowed to occur. We term this class as ``general restricted-mark queries.''

We will now more formally define this class of queries. Consider positive real values $\alpha_1,\dots,\alpha_n$ such that $\alpha_i < \alpha_{i+1}$. These values naturally split the timeline $\R_{\geq 0}$ into $n+1$ spans: $[0,\alpha_1], (\alpha_1, \alpha_2], \dots, (\alpha_{n-1},\alpha_{n}], (\alpha_n, \infty)$.  Furthermore, let $\mathcal{M}_i \subseteq \vocab$ for $i=1,\dots,n$ represent restricted mark spaces for the first $n$ spans. The class of queries is concerned with how likely sequences spanning $[0,\alpha_n]$ respect the restricted mark spaces in each associated interval: \begin{align}
&\prob\left(\cup_{i=1}^n \{\text{No events with types } \mathcal{M}_i \text{ in } t\in(\alpha_{i-1},\alpha_i]\}\right) \nonumber \\
& = \prob\left(\land_{i=1}^n \forall_{(\tau, \kappa) \in \hist(\alpha_{i-1}, \alpha_i]} \kappa \notin \mathcal{M}_i \right) \text{ with } \alpha_0=0.
\end{align}
See \cref{fig:example_query} for an illustrated example query. 
This is a very flexible class of queries that includes many meaningful individual queries, as will be further discussed in \cref{sec:complex_queries}.

\begin{figure}
    \centering
    \includegraphics[width=\columnwidth]{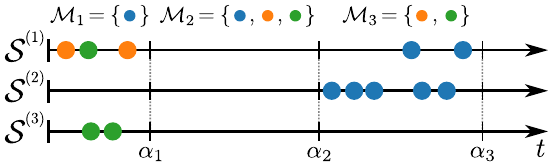}
    \caption{Three potential sequences $\hist^{(i)}$ that satisfies the condition of an example restricted-mark query. The mark space $\mathbb{M}$ in this context is equivalent to that in \cref{fig:example_query_space}.}
    \label{fig:example_query}
\end{figure}

\paragraph{Importance Sampling and Proposal Distribution}
Let $\q$ be a proposal distribution with support over at least the intersection of the support of $\prob$ and the query space $\mathcal{Q}$ (i.e., $\text{supp}(\q) \supseteq \text{supp}(\prob)\cap \mathcal{Q}$). It then follows that 
\begin{align}
\E_{\prob}\left[\ind(\hist(T)\in\mathcal{Q})\right] = \E_{\q}\left[\ind(\hist(T)\in\mathcal{Q})\frac{\prob(\hist(T))}{\q(\hist(T))}\right]. \label{eq:imp_sampling_general}
\end{align}
It can be shown that the optimal proposal distribution (i.e., lowest estimator variance) takes the form \citep{robert1999monte}:
\begin{align}
\q_\text{optimal}(\hist(T)) &:= \frac{|\ind(\hist(T)\in\mathcal{Q})|\prob(\hist(T))}{\E_{\prob}[|\ind(\hist(T)\in\mathcal{Q})|]} \\
& = \prob(\hist(T)\sep\hist(T)\in\mathcal{Q}),
\end{align}
however, this is not immediately usable since it involves computing the exact query that we are trying to estimate in the first place. 

The more our actual proposal distribution $q$ resembles $q_\text{optimal}$, the more efficient our estimation procedure will be. 
Since conditioning on future events is difficult for neural autoregressive models, we can 
instead only apply immediate ``local'' restrictions on the trajectory such that a sequence will remain within $\mathcal{Q}$. This can be accomplished by letting $\q$ be a MTPP with intensity
\begin{align}
\mu^*_k(t)=\ind(k\notin \mathcal{M}_i)\lambda^*_k(t)
\end{align}
for $k\in\mathcal{M}$ and $t\in(\alpha_{i-1}, \alpha_i]$. It should be noted that this can be seen as the natural extension of the proposal distribution in \citet{boyd2022predictive} to continuous time. This naturally leads to the likelihood of any sequence generated under $\q$ as being
\begin{align}
& \q(\hist[0,T]) =  \left(\prod_{i=1}^{N}\mu^*_{\kappa_i}(\tau_i)\right)\exp\left(-\int_0^T \mu^*(s)ds\right) \nonumber \\
& = \left(\prod_{i=1}^{N}\lambda^*_{\kappa_i}(\tau_i)\right)\exp\left(-\sum_{i=1}^n \int_{\alpha_{i-1}}^{\alpha_i} \lambda^*_{\mathbb{M}\setminus \mathcal{M}_i}(s)ds\right) \hspace{-0.2em}
\end{align}
where $N=|\hist[0,T]|$. 
This proposal distribution was exactly constructed so that every sample generated will always belong to the query space.
Applying this to \cref{eq:imp_sampling_general} yields
\begin{align}
\prob(\hist(T)\in\mathcal{Q}) & = \E_{\q}\left[\exp\left(-\sum_{i=1}^n \int_{\alpha_{i-1}}^{\alpha_i} \lambda^*_{\mathcal{M}_i}(s)ds\right)\right].\label{eq:imp_sampling_specific}
\end{align}
Any query in this class can now be estimated in an unbiased fashion by using Monte Carlo estimation on \cref{eq:imp_sampling_specific}.

\paragraph{Estimator Efficiency}
Since both the naive and importance sampled estimators are unbiased, whichever has lower variance can be seen as the more \emph{efficient} estimator. 

Assume that $\mathcal{Q}$ belongs to a general restricted-mark query and that $\pi=\prob(\hist(T)\in\mathcal{Q})$. Let
\begin{align}
\hat{\pi}_\text{Naive}(\hist(T)) & = \ind(\hist(T) \in \mathcal{Q}), \\
\hat{\pi}_\text{Imp.}(\hist(T)) & = \exp\left(-\sum_{i=1}^n \int_{\alpha_{i-1}}^{\alpha_i} \lambda_{\mathcal{M}_i}^*(s)ds \right),
\end{align}
where both are unbiased estimators of $\pi$ under $p$ and $q$ respectively.
Note that $\hat{\pi}_\text{Imp.}(\cdot)\in[0,1]$ as $\lambda^*_k(\cdot) \geq 0$. Finally, let relative efficiency of the two estimators be defined as
\begin{align}
\text{eff}(\hat{\pi}_\text{Imp.}, \hat{\pi}_\text{Naive}):=\frac{\var_{\prob}\left[\hat{\pi}_\text{Naive}(\hist(T))\right]}{\var_{\q}\left[\hat{\pi}_\text{Imp.}(\hist(T))\right]}.
\end{align}

\begin{theorem}
If $\pi \in (0,1)$ and $\lambda^*(t) < \infty$ for all $t \in [0,T]$, then $\text{eff}(\hat{\pi}_\text{Imp.}, \hat{\pi}_\text{Naive}) > 1$. In other words, under these conditions $\hat{\pi}_\text{Imp.}$ is \underline{always} more efficient than $\hat{\pi}_\text{Naive}$. \label{thm:eff}
\end{theorem}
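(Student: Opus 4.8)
The plan is to reduce the variance comparison to a comparison of second moments: since $\hat{\pi}_\text{Naive}$ and $\hat{\pi}_\text{Imp.}$ are both unbiased for $\pi$ (under $\prob$ and $\q$ respectively), we have $\var_{\prob}[\hat{\pi}_\text{Naive}] - \var_{\q}[\hat{\pi}_\text{Imp.}] = \E_{\prob}[\hat{\pi}_\text{Naive}^2] - \E_{\q}[\hat{\pi}_\text{Imp.}^2]$, so it suffices to show the right-hand side is strictly positive. For the naive estimator the indicator is idempotent, so $\E_{\prob}[\hat{\pi}_\text{Naive}^2] = \E_{\prob}[\ind(\hist(T)\in\mathcal{Q})] = \pi$ (equivalently, $\hat{\pi}_\text{Naive}$ is $\mathrm{Bernoulli}(\pi)$ and $\var_{\prob}[\hat{\pi}_\text{Naive}] = \pi(1-\pi)$, which is positive since $\pi\in(0,1)$). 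Writing $W(\hist) := \hat{\pi}_\text{Imp.}(\hist) = \exp\!\big(-\sum_{i=1}^n \int_{\alpha_{i-1}}^{\alpha_i}\lambda^*_{\mathcal{M}_i}(s)\,ds\big)$, the identity \eqref{eq:imp_sampling_specific} is precisely the statement $\E_{\q}[W] = \pi$. Hence $\text{eff}(\hat{\pi}_\text{Imp.}, \hat{\pi}_\text{Naive}) > 1$ is equivalent to $\E_{\q}[W^2] < \pi$.

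The key observation is that $W$ takes values in $(0,1]$: since every $\lambda^*_k \geq 0$ the exponent is nonnegative, so $W \leq 1$, and the hypothesis $\lambda^*(t) < \infty$ on $[0,T]$ makes the exponent (bounded by $\int_0^{\alpha_n}\lambda^*(s)\,ds$) finite, so $W > 0$. Consequently $W^2 \leq W$ pointwise, giving $\E_{\q}[W^2] \leq \E_{\q}[W] = \pi$ and hence $\text{eff} \geq 1$. To upgrade this to a strict inequality, suppose for contradiction that $\E_{\q}[W^2] = \pi$; then $\E_{\q}[W(1-W)] = 0$ with a nonnegative integrand, forcing $W(1-W) = 0$ $\q$-almost surely, and since $W > 0$ this means $W = 1$ $\q$-almost surely. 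But then $\pi = \E_{\q}[W] = 1$, contradicting $\pi \in (0,1)$. Therefore $\E_{\q}[W^2] < \pi$, which is the claim.

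The argument is short once organized this way, and the only delicate points are (i) invoking \eqref{eq:imp_sampling_specific} correctly, which relies on $\q$ being supported exactly on the query space $\mathcal{Q}$ so that the importance weight $\prob/\q$ collapses to $W$ on the relevant set; and (ii) the equality-case analysis, where \emph{both} hypotheses do real work — $\lambda^*(t)<\infty$ rules out $W$ vanishing on a $\q$-nonnull set, and $\pi\in(0,1)$ rules out the degenerate case $W\equiv 1$; dropping either would allow the two variances to coincide. I would flag (ii) as the main obstacle, although it amounts to careful bookkeeping of the degenerate cases rather than any substantial new idea.
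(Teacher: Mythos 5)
Your proof is correct and follows essentially the same route as the paper's: both rest on the observation that $\hat{\pi}_\text{Imp.}\in[0,1]$ gives $\E_{\q}[\hat{\pi}_\text{Imp.}^2]\leq\E_{\q}[\hat{\pi}_\text{Imp.}]=\pi$, so the importance-sampling variance is at most the Bernoulli variance $\pi(1-\pi)$ of the naive estimator. The only difference is in the equality case: the paper invokes an appendix lemma stating that a $[0,1]$-valued random variable with mean $\pi$ and variance $\pi(1-\pi)$ must be $\text{Bern}(\pi)$, and then rules this out because $\lambda^*(t)<\infty$ prevents the estimator from taking the value $0$; you instead argue directly that $\E_{\q}[W(1-W)]=0$ forces $W=1$ almost surely and hence $\pi=1$, a slightly more streamlined bookkeeping of the same degenerate cases (and you share with the paper the mild gloss of treating pointwise finiteness of $\lambda^*$ as finiteness of its integral).
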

\begin{proof}
Since the naive estimator is unbiased and binary, then it follows that $\hat{\pi}_\text{Naive}(\hist(T)) \sim \text{Bern}(\pi)$. Thus, $\var_{\prob}\left[\hat{\pi}_\text{Naive}(\hist(T))\right]=\pi-\pi^2$.

To approach the variance of the importance sampling estimator, we note that
\begin{align}
\var_{\prob}\left[\hat{\pi}_\text{Imp.}(\hist(T))\right] & = \E_{\q}\left[\hat{\pi}^2_\text{Imp.}\right] - \E_{\q}\left[\hat{\pi}_\text{Imp.}\right]^2 \nonumber \\
& = \E_{\q}\left[\hat{\pi}^2_\text{Imp.}\right] - \pi^2 \nonumber \\
& \leq \E_{\q}\left[\hat{\pi}_\text{Imp.}\right] - \pi^2 \text{ since } \hat{\pi}_\text{Imp.}\in[0,1] \nonumber \\
& = \pi-\pi^2  
\end{align}
The equality only holds if $\pi\in\{0,1\}$ or $\hat{\pi}_\text{Imp.} \sim \text{Bern}(\pi)$. The latter condition is due to the fact that for $[0,1]$ bounded random variables with mean $\pi$, if the variance is equal to $\pi - \pi^2$ then this implies it is Bernoulli (see Appendix for proof). However, when $\pi \in (0,1)$ then unless $\lambda^*(t) = \infty$ for some subset of $[0,T]$ it is impossible for $\hat{\pi}_\text{Imp.}(\hist(T))$ to equal $0$. Thus, outside of those circumstances the inequality is strict and $\text{eff}(\hat{\pi}_\text{Imp.},\hat{\pi}_\text{Naive}) > 1$.
\end{proof}

\subsection{Practical Estimation of Complex Queries} \label{sec:complex_queries}
We will now apply our findings from \cref{sec:gen_mark_queries} to produce estimators for three different complex, probabilistic queries.

\paragraph{Marginal Distribution of Hitting Time: $\hit(A)$}
Let $A\subset \mathbb{M}$ and $A \neq \emptyset$. The first occurrence of an event with type $k\in A$, regardless of events of other types, is referred to as the \emph{hitting time} of $A$ or $\hit(A)$. The probabilistic query of the CDF of the hitting time of $A$ at a specific time $t$ can be seen as a query under the general restricted-mark class:
\begin{align}
\prob(&\hit(A) \leq t) = 1 - \prob(\hit(A) > t) \nonumber \\
& = 1 - \prob(\{\text{No events of types } A \text{ in } [0,t] \}) \nonumber \\
& = 1 - \prob(\forall_{(\tau,\kappa)\in\hist[0,t]}\kappa\notin A) \nonumber \\
& = 1- \E_{\q}\left[\exp\left(- \int_{0}^t \lambda^*_{A}(s)ds\right)\right]. \label{eq:hitting_time_is}
\end{align}
Interestingly, the importance sampled result of this query greatly resembles the CDF of the general first event timing: $F_{\tau_1}(t)=1-\exp\left(-\int_0^t \lambda^*(s)ds\right)$.\footnote{It is important to remember that in the general case, we must marginalize over possible trajectories for other types of events $A'$ as these can all either potentially influence the intensity of events of type $A$.} Furthermore, should $A=\mathbb{M}$ then we recover $F_{\tau_1}(t)$ as the estimator becomes deterministic (due to $\mu^*(t)=0 \implies q(\hist)\propto \ind(\hist=\emptyset)$).

\paragraph{Marginal Distribution of $n^\text{th}$ Mark: $\kappa_n$}
Let $A\subset \mathbb{M}$ and $n\geq 1$. The distribution of the marginal $n^\text{th}$ mark describes how likely it is that the $n^\text{th}$ event has a mark $k\in A$, irrespective of the timing of itself or of any of the $n-1$ events that occurred prior. In contrast to hitting time queries, we do not fix the integration bounds but rather sample them to be the timings of the $\tau_{n-1}$ and $\tau_n$. In doing so, this query falls under the general mark-restricted framework:
\begin{align}
\prob(\kappa_n\in A) & = \prob(\{\text{No events of types } A' \text{ in } (\tau_{n-1},\tau_{n}]\}) \nonumber \\
& = \prob(\forall_{(\tau,\kappa)\in \hist(\tau_{n-1},\tau_n]}\kappa\notin A') \nonumber \\
& = \E_{\q}\left[\exp\left(-\int_{\tau_{n-1}}^{\tau_n}\lambda_{A'}^*(s)ds\right)\right]
\end{align}
where $A'=\mathbb{M}\setminus A$. Interestingly, we can also compute the complement under the same framework as $\prob(\kappa_n\in A) = 1-\E_{\q}\left[\exp\left(-\int_{\tau_{n-1}}^{\tau_n}\lambda_{A}^*(s)ds\right)\right]$.

\paragraph{``$A$ before $B$'' Queries}
The last class of queries we will discuss are what we refer to as ``$A$ before $B$'' queries. To be precise, we are interested in the probability of an event with some type $k\in A$ occurring before an event with some type $k\in B$ where $A\cap B=\emptyset$ and non-empty $A,B\subset\mathbb{M}$. In math, this is formally represented as $\prob(\hit(A) < \hit(B))$.

Surprisingly, with our previous developments we can actually estimate this query using importance sampling in conjunction with proposal distribution $\q$. For the proposal distribution, let $\mu_k^*(t)=\ind(k \notin A\cup B)\lambda^*_k(t)$. It then can be shown that
\begin{align}
\prob&(\hit(A)<\hit(B)) \nonumber \\
& = 1-\E_{\q}\left[\int_0^\infty \lambda_B^*(t) \exp\left(-\int_0^t \lambda^*_{A\cup B}(s) ds\right)dt\right] \nonumber \\
& = \E_{\q}\left[\int_0^\infty \lambda_A^*(t) \exp\left(-\int_0^t \lambda^*_{A\cup B}(s) ds\right)dt\right] \label{eq:a_before_b}
\end{align}
with both expressions being equal due to the complement $1-\prob(\hit(A)>\hit(B))$ also being estimable under this derivation. See the Appendix for derivations. 

Interestingly, just like the parallels between the hitting time CDF and the first event time CDF, there exist similar comparisons for \cref{eq:a_before_b} and the analytical form of the marginal distribution for the first mark $\prob(\kappa_1\in A)=\int_0^\infty \lambda^*_A(t)\exp\left(-\int_0^t \lambda^*(s)\right)dt$. Additionally, should $B=A'$ then the estimator becomes deterministic and we recover the form of $\prob(\kappa_1\in A)$. 

Note that the expectations in \cref{eq:a_before_b} are with respect to $\hist(\infty)\sim \q$, which is naturally not possible to evaluate; however, since the integrands are non-negative we can compute natural lower and upper bounds by sampling $\hist(T)\sim\q$ and integrating over $[0,T]$ instead of $[0,\infty)$. Lastly, since these bounds utilize the same proposal distribution, we can actually compute both at the same time for a little extra computation. It then stands to reason that a good estimate for $\prob(\hit(A) < \hit(B))$ would be an average of the upper and lower bounds:
\begin{align}
& \prob(\hit(A) < \hit(B)) \approx \label{eq:ab_estimator}\\
&  \frac{1}{2} + \E_{\q}\hspace{-0.3em}\left[\int_0^T \frac{\lambda_A^*(t)-\lambda^*_B(t)}{2} \exp\left(\hspace{-0.1em}-\hspace{-0.1em}\int_0^t \lambda^*_{A\cup B}(s) ds\hspace{-0.15em}\right)\hspace{-0.15em}dt\right],\nonumber
\end{align}
where $T>0$ can either be set as a constant or could be dynamically determined on a per sequence basis based on some precision threshold.
Since $T$ is truncated, this estimate is no longer unbiased.

\section{Experiments} \label{sec:experiments}

We investigate the effectiveness of our novel importance sampling regime in the context of estimating hitting time, ``A before B,'' and marginal mark distribution queries, while conditioning on partially observed sequences. We find that across both synthetic and real settings as well as parametric and neural-network-based models that our importance sampling estimator dramatically reduces variance compared to naive sampling and results in a much lower error on average. Furthermore, we demonstrate that, on average, these gains in performance outweigh any potential increases in computation time.

\paragraph{Ground Truth}
Computation of any arbitrary query $\prob(\hist(T)\in\mathcal{Q})$ to arbitrary precision is  intractable in the general case. Given this, in our experiments we compute our queries with an unbiased estimator to high precision  using a large amount of computation, with much higher precision than any of the methods and scenarios evaluated for a given experiment. We refer to the result of this high-precision computation as ``ground truth'' below.  

\paragraph{Metrics of Interest}
There are two primary metrics with which we judge query estimation procedures: mean relative absolute error and relative efficiency (or variance reduction should one of the estimators be biased). The former is defined as the mean of $|\pi-\hat{\pi}|/\pi$, where $\pi=\prob(\hist(T)\in\mathcal{Q})$ and $\hat{\pi}$ is some estimator of $\pi$, over different queries (and potentially models). This particular form of error is chosen to offset the fact that $\pi\in[0,1]$, which can lead to naturally closer estimates should $\pi$ be close to $0$ or $1$. The latter metric of interest is the relative efficiency (or variance reduction) of importance sampling compared to naive sampling. This is calculated by dividing the variance of the naive estimator (calculated using ground truth: $\pi(1-\pi)$) with the variance of the importance sampled estimator (calculated empirically). As an example, a value of 5 for this metric indicates that, on average, 5 times as many samples are needed for naive estimation to achieve an estimator variance as low as that of importance sampling.

\subsection{Real-world Experiments}

\paragraph{Datasets}

\begin{table}
\centering
\caption{Real-world Dataset Summary Statistics}
\begin{tabular}{l @{\hspace{0.8cm}} c c c}
\toprule
Dataset &  \# Sequences & $T_{max}$ & \# Marks\\
\midrule
MovieLens & 34,935 & 43,000 & 182 \\
MOOC & 6,863 & 715 & 97 \\
Taobao & 17,777 & 192 & 1,000 \\
\bottomrule
\end{tabular}
\label{tab:datasets}
\end{table}

We conduct our real-world experiments on three sequential user-behavior datasets. In all three, a sequence is defined as the records pertaining to a single individual.
The \textbf{MovieLens 25M} dataset \citep{harper2015movielens} contains records of user-generated movie reviews alongside a rating. Marks represent pairs of categories under which a reviewed movie can be classified.
The \textbf{MOOC} dataset \citep{kumar2019predicting} is a collection of online user-behaviors for students taking an online course. Marks represent the type interaction a student has performed.
Lastly, the \textbf{Taobao} user behavior dataset \citep{zhu2018learning} contains page-viewing records from users on an e-commerce platform. Marks are defined as the category of the item being viewed, with categories outside of the top 1,000 discarded.
All datasets were split into 75\% training, 10\% validation, and 15\% test splits for model fitting and experiments.
Summary statistics for these datasets can be found in \cref{tab:datasets}. All preprocessing details for these three datasets can be found in the Appendix.

\paragraph{Models}
All real-world experiments utilize neural Hawkes models \citep{mei2017neural}, one trained for each dataset. Each model was trained to convergence on the training split with stability/generality ensured via the validation split. All training and model details can be found in the Appendix.

\paragraph{Hitting Time Queries}
For each dataset, we randomly sample 1,000 different sequences $\hist(T)$. For each sequence, we condition on the first five events, $\hist[0,\tau_5]$, and evaluate a hitting time query for the remaining future.\footnote{All experiments evaluate necessary integrals with the trapezoidal rule. For more details, see Appendix.} The specific hitting time query asked is $\prob(\hit(k)\leq t\sep\hist[0,\tau_5])$ where $k:=\kappa_6$ and $t:=10\times\tau_6$ for $(\tau_6,\kappa_6)\in\hist(T)$. 

We compared estimating this query with naive sampling and importance sampling using varying amounts of samples: $\{2,4,10,25,50,250,1000\}$. Mean RAE compared to ground truth (estimated using importance sampling with 5000 samples) can be seen in \cref{fig:hit_err}. We witness roughly an order of magnitude of improvement in performance for the same amount of samples. Primarily, we attribute this improvement to the fact that naive sampling only collects binary values, whereas our proposed procedure collects much more dense information over the entire span $[\tau_5, t]$.

We also analyze the relative efficiency of our estimator compared to naive sampling. For each query asked, the efficiency was estimated using 5000 importance samples. The results can be seen in \cref{fig:hit_eff}.  
We achieve a dramatic decrease in variance by several orders of magnitude, in the majority of contexts, across all datasets. Interestingly, it appears that the efficiency is correlated with the underlying ground truth value $\pi$. We believe this may be due to the form of the importance sampling estimator: $1-\exp\left(-\int_0^t \lambda_k^*(s)ds\right)$. Since the intensity function is non-negative, it is simple for the model to produce estimates close to 0; however, to producing values close to 1 requires the integral to tend towards infinity. 

\begin{figure}[!t]
    \centering{\phantomsubcaption\label{fig:hit_err}\phantomsubcaption\label{fig:hit_eff}}
    \includegraphics[width=\columnwidth]{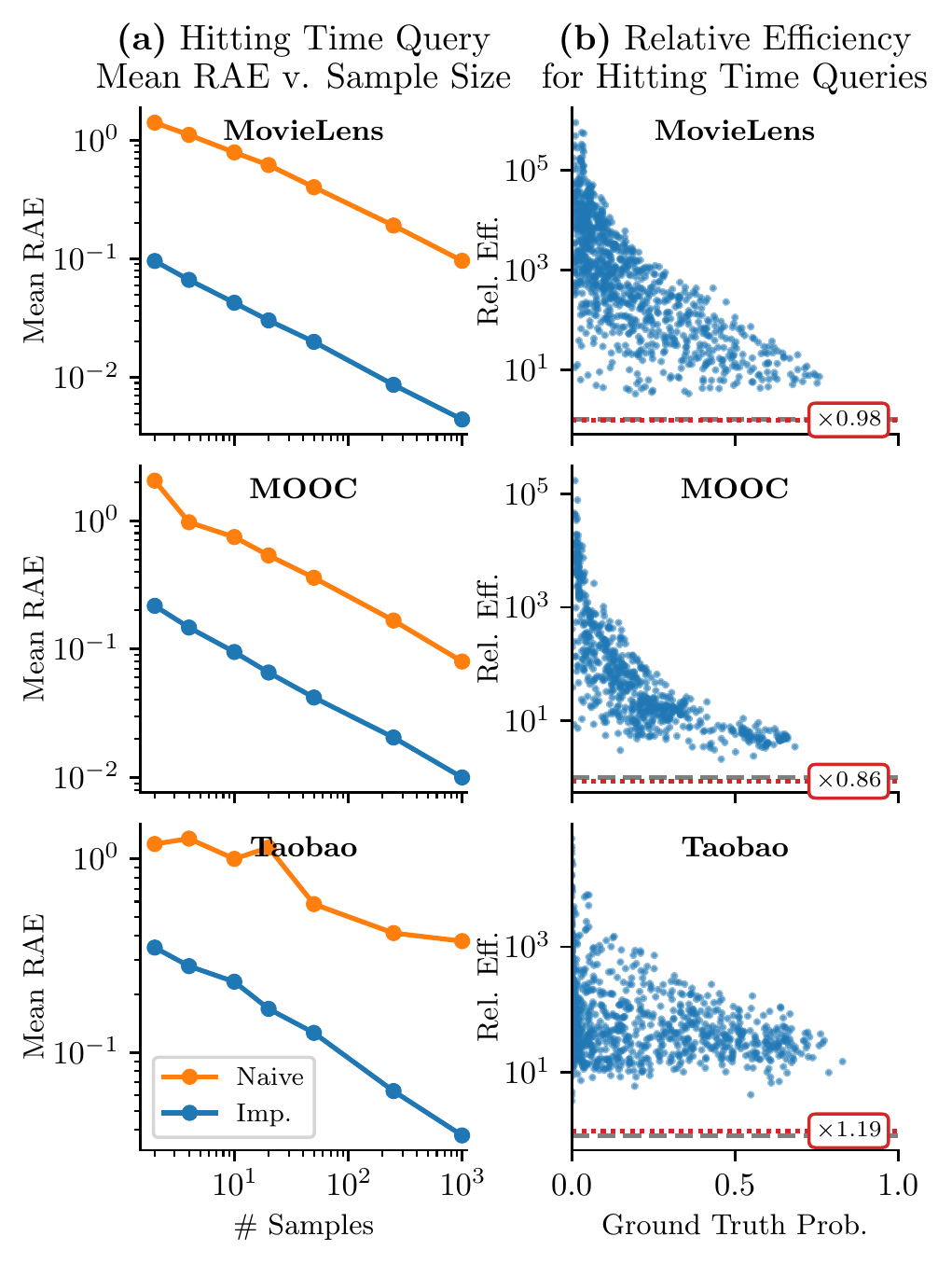}
    \caption{Results from 1000 different hitting time queries evaluated on models trained on three different datasets. (a) Average relative absolute error for naive and importance sampling shown in comparison to number of sampled sequences used. (b) Estimated relative efficiency values for importance sampling compared to naive sampling plotted against ground truth hitting time query values. Gray dashed lines indicate an efficiency of 1. Red lines with associated text box indicate the average multiplicative increase in computation time for importance sampling.}
    \label{fig:hit_plots}
\end{figure}

\paragraph{``A before B'' Queries}
Similar to the hitting time experiments, for ``A before B'' queries we similarly sample 1000 random test sequences and condition on the first five events $\hist[0,\tau_5]$. Then, we estimate the query $\prob(\hit(A) < \hit(B)\sep\hist[0,\tau_5])$ where $A$ and $B$ are randomly chosen to contain one third of the mark space $\mathbb{M}$.  

We compared estimating this query with naive sampling and importance sampling using varying amounts of samples: $\{2,4,10,25,50,250\}$. We utilized the truncated importance sampled estimator, \cref{eq:ab_estimator}, where $T$ is chosen dynamically for each sequence such that a maximum difference of 0.01 is allowed between the upper and lower bounds.
Mean RAE compared to ground truth (estimated using naive sampling with 5,000 samples) can be seen in \cref{fig:ab_err}.\footnote{Importance sampling would have been used for ground truth here; however, it is more sound to use an unbiased estimator for ground truth.} Like the hitting time results, we can see roughly an order of magnitude improvement in performance. Some results indicate that the limiting factor is the precision threshold for choosing $T$ (e.g., see MovieLens results). We also see a similar variance reduction relative to previous experiments, shown in \cref{fig:ab_eff}. Here, the runtime cost is much greater as we have to accumulate an integral over an indefinite amount of time; however, we can see that on average it is still very much ``worth it'' to utilize this framework over naive sampling as evidenced by all of the blue dots above the red line.

\begin{figure}[t]
    \centering{\phantomsubcaption\label{fig:ab_err}\phantomsubcaption\label{fig:ab_eff}}
    \includegraphics[width=\columnwidth]{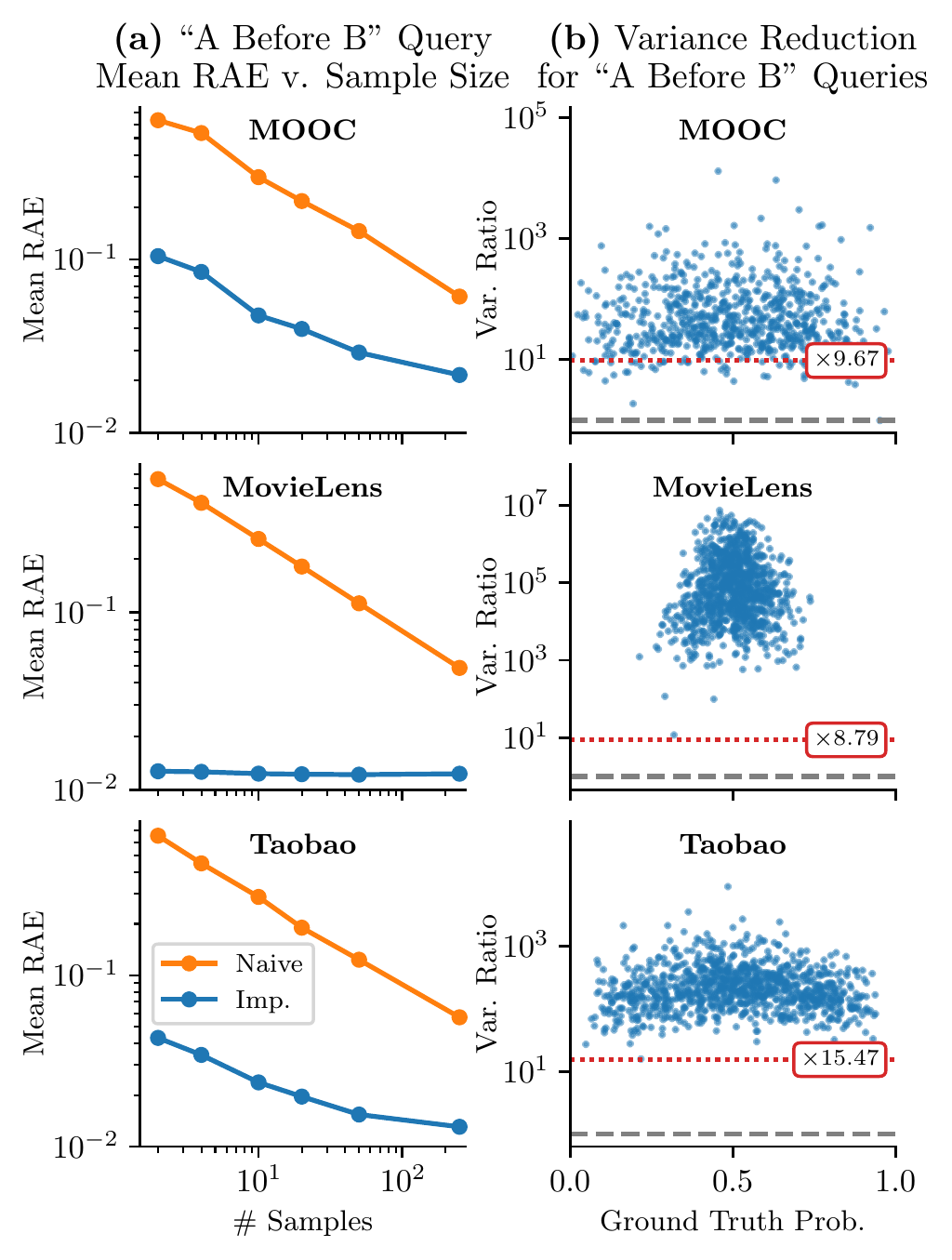}
    \caption{Same setup as seen in \cref{fig:hit_plots} with the same models and datasets only applied to ``A before B'' queries. Results for (b) are presented as ``variance reduction'' instead of ``relative efficiency'' since our derived estimator for importance sampling is biased due to truncating the integral in \cref{eq:ab_estimator}.}
    \label{fig:ab_plots}
\end{figure}

\paragraph{Marginal Mark Distribution Queries}
We additionally performed $n^\text{th}$ marginal mark distribution queries in much the same vein as the hitting time queries. Due to space limitations, the majority of the details and results can be found in the Appendix. That being said, we found that the resulting relative efficiencies for these queries to be much less than those of the other queries, but still more efficient than naive as \cref{thm:eff} suggests. Across the datasets, the median relative efficiency ranged from 1.9 to 2.7.
We speculate this to be due to the fact that the bounds of integration in the estimator are tied to sampled event times rather than being static values, inducing quite a lot of potential variance in the estimator. 

\subsection{Synthetic Experiments}
\begin{figure}
    \centering
    \includegraphics[width=\columnwidth]{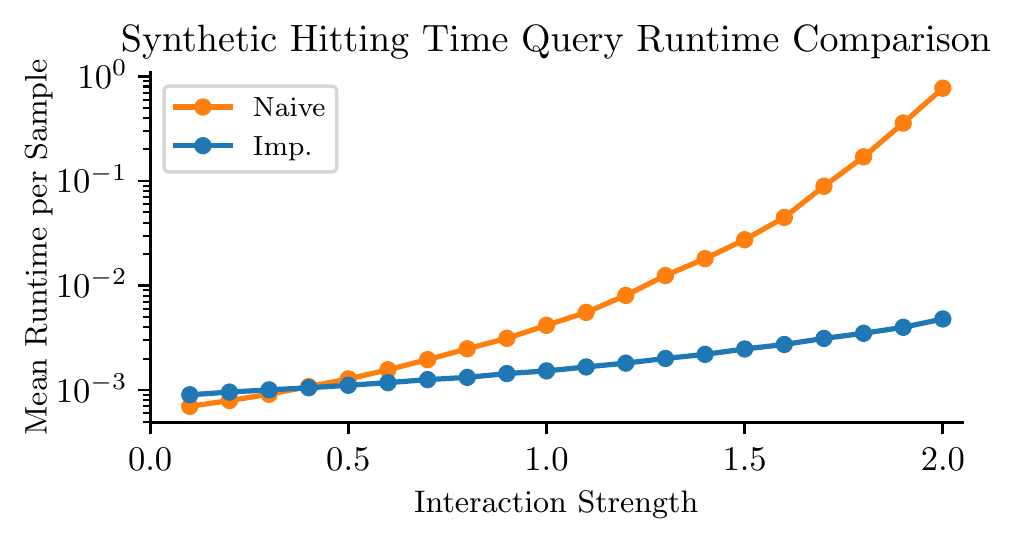}
    \caption{Average wall-clock time taken to generate a \underline{single} sample under naive and importance sampling for hitting time queries with 1,000 randomly instantiated parametric Hawkes models with different scalings of ``interaction strength'' (i.e., amount of modeled cross-mark interaction).}
    \label{fig:interaction_strength}
\end{figure}

For artificial experiments, we wanted to investigate the trends of our estimation procedures for a variety of queries over \emph{many different} models---something that is difficult to do with real-world data as we typically only have access to one model trained on a given dataset.
As such, we primarily focus on randomly instantiated parametric Hawkes processes with exponential kernels (see Appendix for details).
Under this setting, we were able to recreate similar findings in terms of estimation error and efficiency for the types of queries evaluated with real-world data. Due to space limitations these results can be seen in the Appendix.

In addition to these expected results, we also sought to investigate how different aspects of the underlying model affect the estimation procedure. In particular, we measured the average wall-clock time taken to generate samples for naive estimation and importance sampling as a function of how much cross-mark interaction is present. We modulate the \emph{interaction strength} in these generated models by changing the scale of the randomly generated mark-to-mark intensity parameters. The results can be seen in \cref{fig:interaction_strength} where we evaluated both estimation procedures on random hitting time queries for 1,000 different generated models with each across a span of interaction strengths. As more cross-mark excitement is encouraged by a model, the runtime it takes to sample a sequence over the same observation window becomes much longer in general; however, importance sampling counters this trend due to zeroing out (potentially several) marked intensities in $q$, thus barring events from happening. From these results, we can see that our importance sampling procedure is more robust to the underlying dynamics of a model over sampling windows fixed in time.

\section{Conclusion}
In this work, we proposed a general restricted-mark framework that enables us to efficiently answer a range of previously intractable probabilistic queries for continuous-time sequential event data. Experimental results show a significant improvement from importance sampling efficiency by several orders of magnitude compared to naive estimates, results that are consistent across three real-world datasets in different application domains with varying sequence lengths and numbers of marks. 
These results can in principle be further improved by producing more computationally efficient sampling procedures to use in conjunction with our proposed importance sampling estimators.

\subsubsection*{Acknowledgements}
This work was supported by National Science Foundation Graduate
Research Fellowship grant DGE-1839285, by an NSF CAREER Award, by the National Science Foundation under award numbers 1900644, 2003237, and 2007719, by the National Institute of Health under awards R01-AG065330-02S1 and R01-LM013344, by the Department of Energy under grant DE-SC0022331, by the HPI Research Center in Machine Learning and Data Science at UC Irvine, and by Qualcomm Faculty awards.

\bibliography{sample}

\onecolumn
\appendix

\section{Efficiency Proof Lemma}
\begin{lemma}
If a bounded random variable $X\in[0,1]$, with mean $\pi$ and CDF $F$, has $\var\left[X\right] = \pi(1-\pi)$, then $X\sim\text{Bern}(\pi)$.
\end{lemma}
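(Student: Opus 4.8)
The plan is to show that the condition $\var[X]=\pi(1-\pi)$ forces all the mass of $X$ to sit at the endpoints $0$ and $1$. The natural tool is the elementary inequality $x^2 \le x$ for $x\in[0,1]$, with equality exactly when $x\in\{0,1\}$. Since $X\in[0,1]$ almost surely, we have $X^2\le X$ pointwise, hence $\E[X^2]\le\E[X]=\pi$. Combining this with $\var[X]=\E[X^2]-\pi^2$ gives $\var[X]\le\pi-\pi^2=\pi(1-\pi)$, and the hypothesis says this holds with equality. Therefore $\E[X-X^2]=0$.

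Next I would exploit that $X-X^2\ge 0$ almost surely together with $\E[X-X^2]=0$ to conclude $X-X^2=0$ almost surely, i.e.\ $X(1-X)=0$ a.s. This is the standard fact that a nonnegative random variable with zero expectation is zero almost surely. Consequently $X\in\{0,1\}$ almost surely, so $X$ is a Bernoulli random variable with some parameter. Finally, matching the mean, $\Prob(X=1)=\E[X]=\pi$, so $X\sim\text{Bern}(\pi)$, which can also be read off the CDF $F$: it jumps by $1-\pi$ at $0$ and by $\pi$ at $1$ and is otherwise constant.

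There is no real obstacle here; the only point needing a small amount of care is the a.s.\ argument ``nonnegative and zero mean implies zero a.s.'', which one can justify by noting that if $\Prob(X-X^2>0)>0$ then $\Prob(X-X^2>1/m)>0$ for some integer $m$, giving $\E[X-X^2]\ge \frac1m\Prob(X-X^2>1/m)>0$, a contradiction. Everything else is a direct calculation. I would write the proof in three short steps: (i) establish $\E[X^2]\le\pi$ via $X^2\le X$; (ii) deduce $\E[X(1-X)]=0$ and hence $X(1-X)=0$ a.s.; (iii) identify the resulting two-point distribution and fix its parameter via the mean.
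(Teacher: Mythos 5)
Your proof is correct and rests on the same key fact as the paper's: on $[0,1]$ one has $x^2\le x$ with equality only at the endpoints, so the variance hypothesis forces all mass onto $\{0,1\}$ and the mean pins down the parameter. The paper packages this as a contradiction argument by splitting $\int x^2\,dF$ over $\{0,1\}$ and $(0,1)$, whereas you phrase it directly as $\E[X(1-X)]=0$ with $X(1-X)\ge 0$ a.s.; this is only a cosmetic difference.
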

\begin{proof}
Let $X$ be a random variable with support $[0,1]$, mean $\pi$, and variance $\pi(1-\pi)$. It then follows that:
\begin{align}
\var\left[X\right] & = \E\left[X^2\right] - \E\left[X\right]^2 \\
\implies \pi(1-\pi) & = \E\left[X^2\right] - \pi^2 \\
\implies \pi & = \E\left[X^2\right] \\
 & = \int_{[0,1]} x^2 dF(x) \\
 & = \int_{\{0,1\}} x^2dF(x) + \int_{(0,1)} x^2 dF(x) \\
 & = \prob(X=1)  + \int_{(0,1)} x^2 dF(x)
\end{align}
$\int_{(0,1)} x^2 dF(x) > 0$ if and only if $\prob(X \in (0,1)) > 0$. If we assume that $p(X \in (0,1)) > 0$, then it follows that:
\begin{align}
\pi & = \prob(X=1)  + \int_{(0,1)} x^2 dF(x) \\
& < \prob(X=1)  + \int_{(0,1)} x dF(x) \\
& = \prob(X=1) + (\pi - \prob(X=1)) \\
& = \pi,
\end{align}
however, $\pi \nless \pi$. Hence, by contradiction $\prob(X \in (0,1)) = 0$ which implies that $\prob(X=1)=\pi$ and $\prob(X=0)=1-\pi$ since $\E\left[X\right]=\pi$. Thus, it can be concluded that $X \sim \text{Bern}(\pi)$.
\end{proof}

\section{Deriving ``A Before B'' Estimator}
Let $A,B\subset\mathbb{M}$ and $A\cap B = \varnothing$. Recall that $\hist_A[0,t]$ is the sequence of events over times $[0,t]$ with the restriction that the marks must all belong to $A$. Finally, let $\q$ describe a proposal distribution with $\mu_k^*(t) = \ind(k \notin A \cup B)\lambda^*_k(t)$. With this in mind, we derive the expected value expression for the ``A Before B'' queries:
\begin{align}
\prob&\left(\hit(A) < \hit(B)\right) = \int_0^\infty \prob\left(\hit(A) < \hit(B), \hit(A) = t\right) dt \\
& = \int_0^\infty \sum_{k\in A}\prob\left(\hist [t,t]=\{(t,k)\}, \hist_A(t)=\varnothing, \hist_B(t)=\varnothing\right) dt \\
& = \int_0^\infty \sum_{k\in A}\prob\left(\hist [t,t]=\{(t,k)\}, \hist_{A\cup B}(t)=\varnothing\right) dt \\
& = \int_0^\infty \sum_{k\in A}\E_{p}\left[\prob\left(\hist [t,t]=\{(t,k)\}, \hist_{A\cup B}(t)=\varnothing\sep\hist(t)\right)\right] dt \displaybreak\\
& = \int_0^\infty \sum_{k\in A}\E_{\hist(t)\sim \prob}\left[\prob\left(\hist [t,t]=\{(t,k)\}\sep \hist_{A\cup B}(t)=\varnothing, \hist(t)\right)\prob\left(\hist_{A\cup B}(t)=\varnothing\sep \hist(t)\right)\right] dt \\
& = \int_0^\infty \sum_{k\in A}\E_{\hist(t)\sim \prob}\left[\prob\left(\hist [t,t]=\{(t,k)\}\sep \hist(t)\right)\ind\left(\hist_{A\cup B}(t)=\varnothing\right)\right] dt \\
& = \int_0^\infty \sum_{k\in A}\E_{\hist(t)\sim \prob}\left[\prob\left(\hist [t,t]=\{(t,k)\}\sep \hist(t)\right)\ind\left(\hist_{A\cup B}(t)=\varnothing\right)\right] dt \\
& = \int_0^\infty \sum_{k\in A}\E_{\hist(t)\sim \prob}\left[\lambda^*_k(t)\ind\left(\hist_{A\cup B}(t)=\varnothing\right)\right] dt \\
& = \int_0^\infty \E_{\hist(t)\sim \prob}\left[\lambda^*_A(t)\ind\left(\hist_{A\cup B}(t)=\varnothing\right)\right] dt \\
& = \int_0^\infty \E_{\hist(t)\sim \q}\left[\lambda^*_A(t)\ind\left(\hist_{A\cup B}(t)=\varnothing\right)\frac{\prob\left(\hist(t)\right)}{\q\left(\hist(t)\right)}\right] dt \\
& = \int_0^\infty \E_{\hist(t)\sim \q}\left[\lambda^*_A(t)\exp\left(-\int_0^t \lambda^*_{A \cup B}(s) ds\right)\right] dt \\ 
& = \int_0^\infty \E_{\hist(\infty)\sim \q}\left[\lambda^*_A(t)\exp\left(-\int_0^t \lambda^*_{A \cup B}(s) ds\right)\right] dt \\ 
& = \E_{\hist(\infty)\sim \q}\left[\int_0^\infty \lambda^*_A(t)\exp\left(-\int_0^t \lambda^*_{A \cup B}(s) ds\right)\right] dt 
\end{align}
where the last line is justified due to the Dominated Convergence Theorem. The prerequisites for this theorem are satisfied by noting that:
\begin{align}
\int_0^\infty \lambda^*_{A}(t)\exp\left(-\int_0^t \lambda^*_{A \cup B}(s) ds\right)dt & \leq  \int_0^\infty \lambda^*_{A\cup B}(t)\exp\left(-\int_0^t \lambda^*_{A \cup B}(s) ds\right)dt \\
& = -\int_0^\infty \frac{d}{dt}\exp\left(-\int_0^t \lambda^*_{A \cup B}(s) ds\right)dt \\
& = \exp\left(-\int_0^0 \lambda^*_{A \cup B}(s) ds\right) - \exp\left(-\int_0^\infty \lambda^*_{A \cup B}(s) ds\right) \\
& = 1 - \exp\left(-\int_0^\infty \lambda^*_{A \cup B}(s) ds\right) \\
& \leq 1.
\end{align}

\section{Further Experimental Details and Results}

\subsection{Dataset Preprocessing}
We evaluate our methods for probabilistic querying on three real-world user-behavior datasets in different application domains that are publicly available. All datasets do not include personally identifiable information, where users are identified by unique integer IDs. For all our experiments, sequences are defined as the event histories of each user, where events have timestamps in seconds. We changed the time resolution from seconds to hours for better interpretability of our query implications. Additionally, we only consider sequences with at least 5 events and at most 200 events. We use 75\% of the sequences for training, 10\% for validation, and 15\% for testing.

\paragraph{MovieLens}
The MovieLens 25M dataset \citep{harper2015movielens} contains 25 million movie ratings by 162,000 users. The movie category (genre) associated with each rating is modeled as marks, and the exact rating value is ignored.\footnote{A single movie in this dataset can possibly have multiple categories associated with it. To accommodate this, if a movie has multiple categories we randomly select a subset of two categories to represent the movie. Note this highlights the benefits of formulating queries as sets of marks instead of just singular marks. To evaluate the hitting time of the next ``comedy'' movie reviewed, then we would need to evaluate the hitting time of the set of all pairs of categories where one element is the comedy genre. This is essentially describing marginalizing over a hierarchical structure for the marks.} For each sequence, the start and the end time are defined as the first and the last event time of each user respectively, because the time span for different users ranges from seconds to years. The first event is discarded in the sequence of history and is only used to indicate $t=0$. For consistent dynamics across the dataset, we filter the data to only contain reviews at or after the year 2015. This leaves 34,935 remaining sequences, each from a unique user.

\paragraph{MOOC}
The MOOC user action dataset \citep{kumar2019predicting} represents user activities on a massive open online course (MOOC) platform. It consists of 411,749 course activities in 97 different types modeled as marks for 7,047 users, out of which 4,066 users dropped out after an activity. Timestamps are standardized to start from timestamp 0. We use the last event time for drop-out users as the end of their sequences, and the maximum timestamp for the other users. 

\paragraph{Taobao}
The Taobao user behavior dataset \citep{zhu2018learning} was originally intended for recommendations for online shopping, which includes four behaviors: page viewing, purchasing, adding items to the chart, and to wishlist. We focus on page viewing of users as events, and model the item category as the event mark, which has marketing implications such as click through rate of recommending some types of items. Due to the large scale of the dataset, we use a subset of 2,000,000 events on 8 consecutive calendar days inclusive (November 25th, 2017 - December 2nd, 2017), as well as the most frequent 1,000 marks (item categories) to demonstrate query answering. All user sequences have the same length.

\subsection{Modeling Details}
For each of the real-world datasets, a neural Hawkes process model \citep{mei2017neural} was trained with a batch size of 128, a learning rate of 0.001, a linear warm-up learning rate schedule over the first 1\% of training iterations, a max allowed gradient norm of $10^4$ for training stability, and the Adam stochastic gradient optimization algorithm \citep{KingmaB14} with default hyperparameters. Specific datasets had specific model hyperparameters due to differences in the amount of data and total possible marks. The details for these can be found in \cref{tab:model_details}. All models were trained for a fixed amount of epochs; however, each one was confirmed to have converged based on average held-out validation log-likelihood.

\begin{table}[H]
    \centering
    \caption{Model Hyperparameters for Real-World Datasets}
    \begin{tabular}{lccc}
    \toprule
    Hyperparameter & MovieLens & MOOC & Taobao \\
    \midrule
    \# Training Epochs & 100 & 100 & 300 \\
    Mark Embedding Size & 32 & 32 & 64 \\
    Recurrent Hidden State Size & 64 & 64 & 128 \\
    \bottomrule
    \end{tabular}
    \label{tab:model_details}
\end{table}

\subsection{Integration Approximation}
For the real-world experiments, many integrals need to be evaluated in order to produce estimates for various queries. Since we use essentially black-box MTPP models, we do not have access to an analytical form for integration. As such, we must estimate every integral at play.

To do this, we utilize the trapezoidal rule. For reference, this involves estimating integrals with the following summation:
\begin{align}
\int_a^b f(x)dx \approx \sum_{i=1}^N \left(f(x_i)+f(x_{i-1})\right)\frac{x_i-x_{i-1}}{2}
\end{align}
where the points $x_{i-1} < x_i$ span the interval $[a,b]$ with $x_0=a$ and $x_N=b$. For hitting time queries and marginal mark queries, we utilize $N=1000$ integration points with equal spacing. It is likely that we could get by with much less for these queries, however, for the sake of high precision for experimental results we utilized a large amount of sample points.

For the ``A Before B'' queries, we found that the resolution at which the estimator is evaluated at is of much more importance than the other queries. As such, for this query we estimate integrals in an online fashion during the sampling procedure for each proposal distribution sample sequence in conjunction with a very high proposal dominating rate (see \citet{ogata1981lewis} for details). This allowed for a much more efficient procedure (in both computation and memory consumption) compared to integrating results after sampling. 

\subsection{Marginal Mark Query Experiments}
Similar to the hitting time experiments, for the marginal mark queries we similarly sample 1000 random test sequences and condition on the first five events $\hist[0,\tau_5]$. Then, we estimate the query $\prob(\kappa_{8}\in A\sep\hist[0,\tau_5])$ where $A$ is a randomly selected subset of all of the unique marks that appear in the entire sequence $\hist$. This is done to ensure that $A$ contains relevant marks for the given sequence.  

\begin{figure}[H]
    \centering{\phantomsubcaption\label{fig:marg_mark_err}\phantomsubcaption\label{fig:marg_mark_eff}}
    \includegraphics{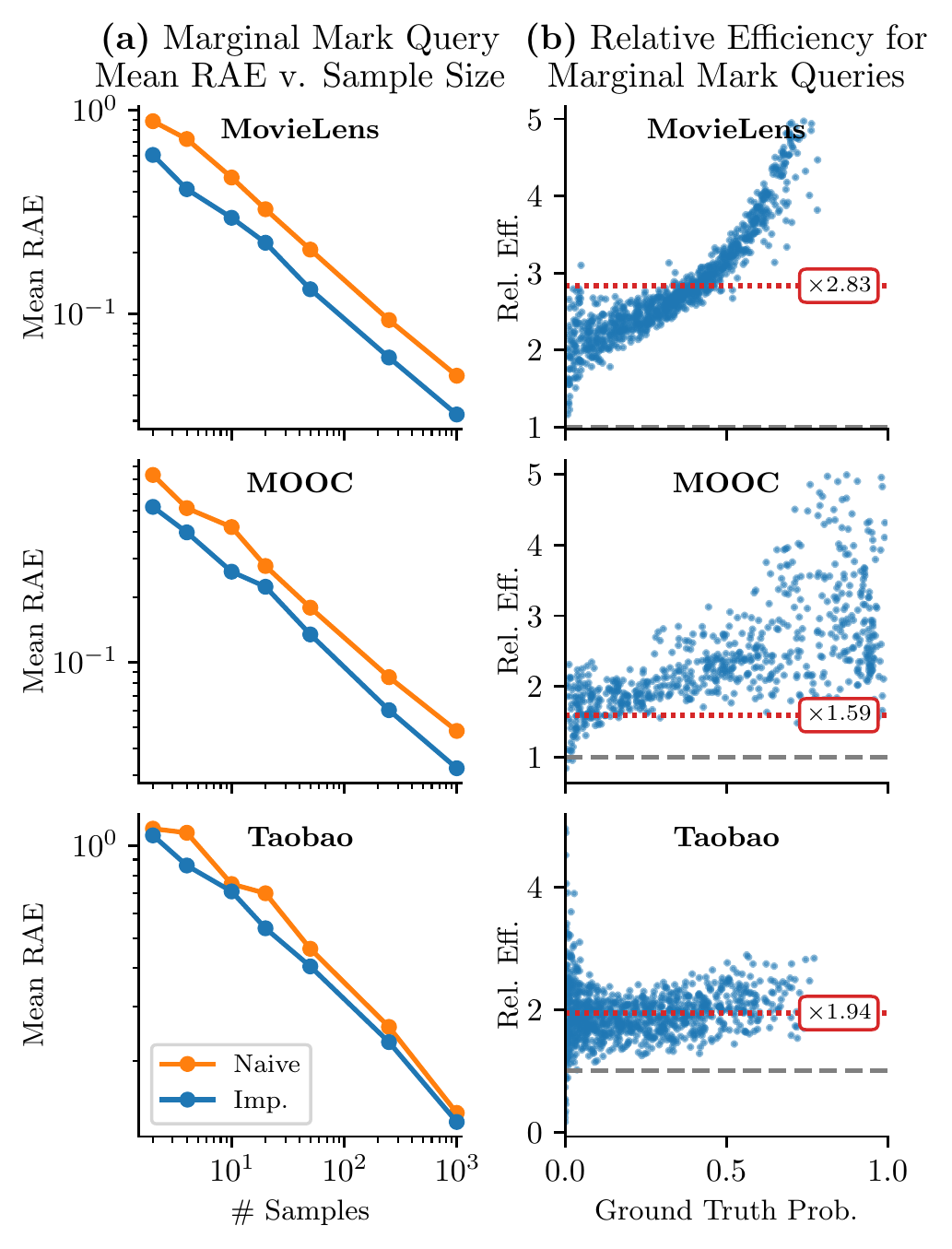}
    \caption{Results from 1,000 different marginal mark queries evaluated on models trained on three different datasets. (a) Average relative absolute error for naive and importance sampling shown in comparison to number of sampled sequences used. (b) Estimated relative efficiency values for importance sampling compared to naive sampling plotted against ground truth marginal mark query values. Gray dashed lines indicate an efficiency of 1. Red lines with associated text box indicate the average multiplicative increase in computation time for importance sampling.}
    \label{fig:marg_mark_plots}
\end{figure}

We compared estimating this query with naive sampling and importance sampling using varying amounts of samples: $\{2,4,10,25,50,250,1000\}$. Mean RAE compared to ground truth (estimated using importance sampling with 5,000 samples) can be seen in \cref{fig:marg_mark_err}. We witness roughly $1.5$ to $3$ times improvement in performance for the same amount of samples. 
Similar to hitting time query results, we attribute this improvement to the fact that naive sampling only collects binary values, whereas our proposed procedure collects much more dense information over the entire span from $\tau_5$ to $\tau_8\sim q$.

We also analyze the relative efficiency of our estimator compared to naive sampling. For each query asked, the efficiency was estimated using 5,000 importance samples. The results can be seen in \cref{fig:marg_mark_eff}.  
We achieve a decent decrease in variance, in the majority of contexts, across all datasets. Like the hitting time query results, we also note a pretty strong correlation between underlying ground truth values and the relative efficiency of this estimator. 

Notably, these results don't appear to be as drastic as the hitting time query results. We believe this is due to the fact that the estimator's bounds of integration are sampled from the proposal distribution to be between $\tau_{N-1}$ and $\tau_N$ for each sequence (whereas the bounds for the hitting time query $p(\hit(k)\leq t)$ is always the span of $[0,t]$). This added variability seems to dampen the impact of the integration in the first place.

\subsection{Synthetic Experiments}
We also perform experiments on hitting time queries and ``A Before B'' queries using self-exciting parametric Hawkes processes \citep{hawkes1971spectra} with exponential kernels. The intensity has the explicit form:
\begin{align}
    \lambda_k^*(t) &= \mu_k + \sum_{\kappa=1}^K \int_0^t \phi_{k \kappa}(t - u) d N_\kappa(u) \label{eq:mutual_density}\\
    &= \mu_k + \sum_{\kappa=1}^K \sum_{\tau_{\kappa, i} < t} \phi_{k \kappa}(t - \tau_{\kappa, i}) \label{eq:mutual_density2},
\end{align}
where $\tau_{\kappa, i}$ refers to the time when the $i$th event of type $\kappa$ occurs, $\phi(\boldsymbol{x}) = \boldsymbol{\alpha} e^{-\boldsymbol{\beta} \boldsymbol{x}}$ with $\boldsymbol{\alpha}, \boldsymbol{\beta} > \boldsymbol{0}$, and Equation \ref{eq:mutual_density2} can be expressed in matrix form. The first term $\boldsymbol{\mu}$ is referred to as the \textit{base intensity} or \textit{background intensity} in literature. Each event instantaneously increases the intensity by corresponding $\alpha$ and its influence decays exponentially at $\beta$ over time. The total intensity of all $K$ types of events is
\begin{align}
    \lambda^*(t) = \sum_{k=1}^K \lambda_k^*(t). \label{eq:hawkes_total_intensity}
\end{align}
Under this parametric form, the integrals for query estimates can be computed in closed forms. 

We evaluate our methods on (i) hitting time queries $p(\hit(k)\leq t)$
and (ii) ``A before B'' queries $p(\hit(A) < \hit(B))$. All results are averaged over 1,000 different randomly initiated parametric self-exciting Hawkes models that are not feasible for real-world datasets. These random models have different total amounts of marks ranging from $K=3$ to $K=10$, and have different inter-event effects as well as exponential rates of decay. We use 10 integration points for hitting time queries and 1,000 integration points for ``A Before B'' queries. \footnote{For the ``A Before B'' queries, using 1,000 integration points after sampled provided sufficient precision and we did not need to employ the online integration approach used with the real-world experiments. This is most likely attributable to the well-behaved dynamics exhibited by the classic parametric Hawkes intensity. This is also why we used a reduced amount of integration points for the synthetic hitting time queries as well compared to the real-world experiments.} 

For each hitting time query, we fix $t=1$ and $k=0$, because the model is randomly generated.
For the ``A Before B'' queries, like the real-world experiments we let them be randomly sampled subsets of the vocabulary such $|A|=|B|\approx K/3$.
We evaluate the hitting time queries using varying amounts of samples: $\{2,4,10,25,50,250,1000\}$. For ``A before B'' queries, we only use $\{2,4,10,25,50,250\}$ number of samples because the query estimates take longer. Ground truth probabilities are calculated using 5,000 samples with importance sampling for hitting time queries and with naive method for ``A before B'' queries respectively. The plots \cref{fig:supp_hawkes_hitting} and \cref{fig:supp_hawkes_ab_overall} reveal the similar patterns that our method is more efficient than the naive estimates averaged over a range of different model settings.\footnote{We note that in preliminary synthetic experiments on hitting time queries, we did observe similarly drastic relative efficiency values for hitting time queries with lower ground truth probability as seen in the real-world experiments.}

\begin{figure}[H]
    \centering{\phantomsubcaption\label{fig:supp_hawkes_hit}\phantomsubcaption\label{fig:supp_hawkes_hit_eff}}
    \includegraphics{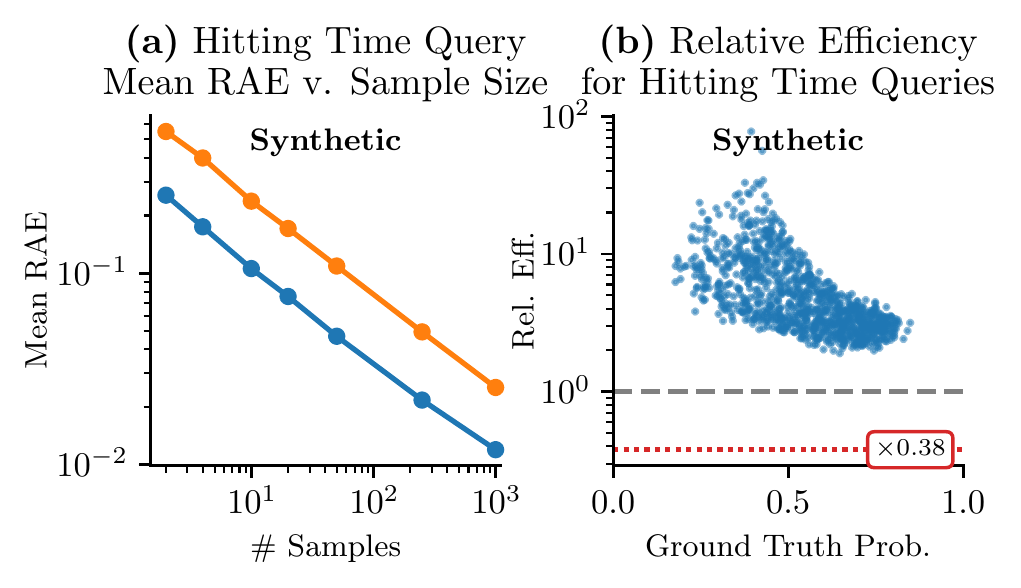}
    \caption{Synthetic experiments for hitting time queries evaluated on parametric self-exciting Hawkes processes. (a) Average relative absolute error for naive and importance sampling shown in comparison to number of sampled sequences used. (b) Estimated relative efficiency values for importance sampling compared to naive sampling plotted against ground truth hitting time query values. Gray dashed lines indicate an efficiency of 1. Red lines with associated text box indicate the average multiplicative increase in computation time for importance sampling.
    }
    \label{fig:supp_hawkes_hitting}
\end{figure}

\begin{figure}[H]
    \centering{\phantomsubcaption\label{fig:supp_hawkes_ab}\phantomsubcaption\label{fig:supp_hawkes_ab_eff}}
    \includegraphics{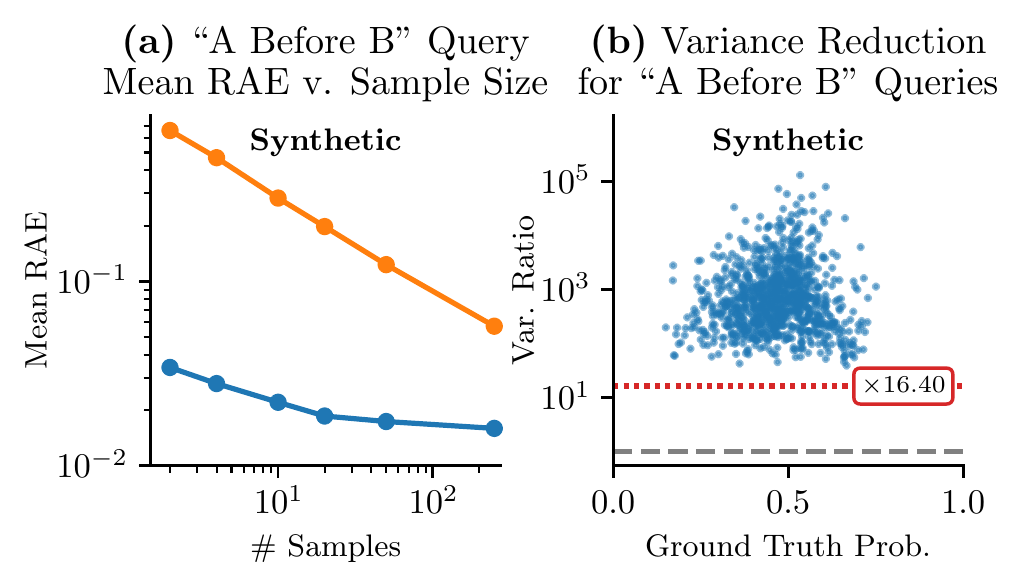}
    \caption{Synthetic experimental results evaluated on 1,000 different random models and ``A before B'' queries for parametric self-exciting Hawkes processes. (a) Average relative absolute error for naive and importance sampling shown in comparison to number of sampled sequences used. (b) Estimated relative efficiency values for importance sampling compared to naive sampling plotted against ground truth ``A before B'' query values. Gray dashed lines indicate an efficiency of 1. Red lines with associated text box indicate the average multiplicative increase in computation time for importance sampling.}
    \label{fig:supp_hawkes_ab_overall}
\end{figure}

\end{document}